\documentclass[11pt]{article}

\usepackage[margin=1in]{geometry}
\usepackage{amsmath,amssymb,amsfonts}
\usepackage{graphicx}
\usepackage{booktabs}
\usepackage{hyperref}
\usepackage{cleveref}

\usepackage[utf8]{inputenc} 
\usepackage[T1]{fontenc}    
\usepackage{url}            
\usepackage{caption}       
\usepackage{amsthm}         
\usepackage[ruled,vlined]{algorithm2e}
\usepackage{nicefrac}       
\usepackage{microtype}      
\usepackage{xcolor}         

\newtheorem{theorem}{Theorem}
\newtheorem{corollary}[theorem]{Corollary}
\newtheorem{lemma}[theorem]{Lemma}
\newtheorem{definition}{Definition}

\newcommand\ceil[1]{\left\lceil#1\right\rceil}
\newcommand{\methodname}{Quadrature-TreeSHAP}
\newcommand{\methodshort}{Q-TreeSHAP}

\title{\methodname: Depth-Independent TreeSHAP and Shapley Interactions}

\author{
  Ron Wettenstein\thanks{Equal contribution.} \\
  Reichman University, Herzliya, Israel \\
  \texttt{ron.wettenstein@post.runi.ac.il}
  \and
  Rory Mitchell\footnotemark[1] \\
  Nvidia Corporation \\
  \texttt{ramitchellnz@gmail.com}
  \and
  Peng Yu\footnotemark[1] \\
  Shopify \\
  \texttt{peng.yu@shopify.com}
}
\date{}

\begin{document}

\maketitle

\begin{abstract}
Shapley values are a standard tool for explaining predictions of tree ensembles, with Path-Dependent SHAP being the most widely used variant. Despite substantial progress, existing methods still exhibit trade-offs between depth-dependent runtime, numerical stability, and support for higher-order interactions. To address these challenges, we introduce \methodname{}, a quadrature-based reformulation of Path-Dependent TreeSHAP that is numerically stable, naturally extends to any-order Shapley interaction values and is practically insensitive to tree depth. 
Our implementation supports both CPU and GPU and is integrated into XGBoost.

Our method is based on a weighted-Banzhaf interaction polynomial, which expresses Banzhaf interaction values as expectations under a feature participation probability $p$. Shapley values and any-order interaction values are then recovered by integrating these polynomials over $p$ from 0 to 1. We evaluate these integrals using Gauss--Legendre quadrature, and show that, in practice, only $8$ fixed quadrature points are sufficient to reach machine precision. In fact, \methodname{} with $8$ fixed points achieves greater numerical stability than TreeSHAP. This fixed-point formulation removes depth dependence from the inner computation and enables efficient SIMD execution. 

We confirm these advantages empirically. On 12 XGBoost benchmarks, \methodname{} computes Shapley values $1.06\times$--$10.59\times$ faster than TreeSHAP on CPU and $1.84\times$--$6.95\times$ faster than GPUTreeSHAP on GPU. Shapley pairwise interactions are $3.80\times$--$58.11\times$ faster on CPU, with higher-order interactions achieving speedups of up to $1200\times$ compared to TreeSHAP-IQ.
\end{abstract}

\section{Introduction}
\label{sec:intro}

Decision tree ensembles such as XGBoost~\cite{chen2016xgboost}, Random Forests~\cite{breiman2001random}, and CatBoost~\cite{catboost} are widely used for classification and regression. As their use has grown, so has the need to explain their predictions. SHAP~\cite{lundberg2017unified} addresses this by framing explanation as a cooperative game and assigning each feature a Shapley value~\cite{shapley1953value}. While computing Shapley values on certain games and machine learning models is \#P-hard~\cite{shapley_calculation_is_npc, complexity_of_SHAP, tractability_shap_explanations}, the additivity structure of decision trees allows their efficient computation~\cite{lundberg2020local}. Beyond individual attributions, SHAP also extends to Shapley interaction values, which capture pairwise feature interactions~\cite{PB_shap_and_banzhaf, shapley_taylor_interaction, fumagalli2023shapiq}.

SHAP is commonly used with decision-tree models to understand model behavior and, in some cases, properties of the underlying data. Applications span finance~\cite{credit_risk_shap_xgboost,shap_load_default,shap_in_credit_fraud}, medicine~\cite{shap_heart_attack,shap_thyroid}, ecology and environmental science~\cite{shap_for_rivier_health,shap_for_water_quality,shap_for_plants}, disaster prediction and prevention~\cite{shap_for_landslide,shap_for_fire_prevention}, law enforcement~\cite{widlife_trading_airports,shap_law_enforcement}, and other domains. 

Tree ensembles are also used as surrogate-based explanation tools for more complex predictors and pipelines, since they can be explained efficiently with TreeSHAP~\cite{lundberg2020local}. A tree model is trained to mimic a black-box system, and TreeSHAP is then applied to approximate the original model's feature contributions. This paradigm has been explored in recent work~\cite{black_box_surrogate,ts_shap,surrogate_shap,hyper_shap}.


SHAP on decision tree ensembles has two commonly used variants. Background SHAP~\cite{laberge2022understandinginterventionaltreeshap} integrates over an explicit background dataset and can be computationally expensive, while Path-Dependent SHAP~\cite{lundberg2020local} efficiently estimates the effect of missing features using the training statistics stored along the tree. Although Path-Dependent SHAP can be inconsistent, providing different explanations for equivalent decision trees (as shown in~\cite{FastPD, symmetric_trees}), its efficient computation makes it the most widely used approach in practice.

TreeSHAP~\cite{lundberg2020local} is the first proposed algorithm for Path-Dependent SHAP and GPUTreeSHAP~\cite{GPUTreeShap} provides a GPU implementation of the approach. Both run in $O(MTLD^{2})$ time, where $M$ is the number of samples, $T$ is the number of trees, $L$ the number of leaves, and $D$ the maximum tree depth.
FastTreeSHAP v2~\cite{yang2022fast} and Woodelf~\cite{woodelf} reduce the complexity to $O(MTLD)$ time, but introduced a preprocessing step that is independent of $M$ but exponential in $D$, making them impractical for high depths trees. FastTreeSHAP v1~\cite{yang2022fast} reduces the complexity of TreeSHAP by constant factors.

Linear TreeSHAP~\cite{yu2022linear} recast the computation as polynomial arithmetic, achieving $O(MTND)$ time ($N$ = nodes), but suffers from numerical instability (see \cite{li2026treegrad}, Section 5). This polynomial view is further examined in~\cite{banzhaf_for_decision_trees}, which compares Shapley and Banzhaf attributions and studies their computational and theoretical trade-offs for tree models. TreeGrad-Shap~\cite{li2026treegrad} derives a gradient-based formulation via the multilinear extension, achieving $O(MTN\ceil{d/2})$, where $d$ is the maximal number of unique features along a root-to-leaf path ($d \le D$), and having no numerical instability. 

TreeGrad handles only first-order feature attributions. TreeSHAP-IQ~\cite{muschalik2024treeshapiq, NEURIPS2024_shapiq} extends Linear TreeSHAP polynomial arithmetic to any order interactions. Its complexity, when computing order-$s$ interaction on a dataset with $F$ features is $O(MTLD \binom{F-1}{s-1})$. In this paper, we provide an any-order Shapley interaction algorithm based on a gradient formulation that significantly improves the complexity of TreeSHAP-IQ.

\paragraph{Our contributions} are as follows:
\begin{enumerate}
\item We present an efficient method for computing any-order Shapley interaction values, improving the state of the art complexity of TreeSHAP-IQ from $O(MTLD \binom{F-1}{s-1})$ to $O(MTN d^s)$. This yields substantial gains in the highly common case when the number of features exceeds the tree depth ($F > D \ge d$). 

\item Our approach computes Shapley values as integrals of a weighted Banzhaf polynomial, evaluated efficiently using Gauss–Legendre quadrature. We provide both theoretical and empirical analysis of the number of quadrature points required. We show that in practice, even in very high depths, 8 Gauss--Legendre points are enough to reach the machine precision. Using a constant number of quadrature points removes the dependence on $d$ in practice, reducing the effective complexity from $O(MTN\ceil{d/2})$ to $O(MTN)$. More generally, for order-$s$ interactions, it reduces from $O(MTN d^s)$ to $O(MTN d^{s-1})$.

\item We show how to exploit SIMD efficiently in this fixed-point formulation, and implement the resulting method in C++. Additionally, we give the first GPU implementation of this quadrature-based idea in XGBoost. Our implementation (CPU and GPU) achieves $1.06\times$--$10.59\times$ speedups over TreeSHAP and $1.84\times$--$6.95\times$ speedups over GPUTreeSHAP.

\item The method is open-sourced, replacing the TreeSHAP backend in the XGBoost library~\footnote{This pull request merged our C++ implementation of \methodname{} into XGBoost: \url{https://github.com/dmlc/xgboost/pull/12179}. The GPU integration is still ongoing.}~\footnote{Integrating our code into XGBoost also speeds up the widely used \texttt{shap} package: for Path-Dependent SHAP on XGBoost models, \texttt{TreeExplainer} calls the XGBoost implementation behind the scenes and returns its output. For example, \texttt{TreeExplainer(xgb\_model).shap(df)} with a recent XGBoost version will use our approach.}. The change will appear in the next XGBoost release (following 3.2.0). To use our implementation, call \texttt{xgb\_model.predict(pred\_contribs=True)} for Shapley values and \texttt{xgb\_model.predict( pred\_interactions=True)} for Shapley interaction values (same API as before).


\end{enumerate}

Together, these contributions yield a numerically stable, hardware-efficient, GPU-friendly, and interaction-aware reformulation of Path-Dependent SHAP, improving both theory and practice.

\section{Polynomial Construction for TreeSHAP}

We first recall the polynomial view of TreeSHAP introduced by Linear TreeSHAP. 

Consider a decision tree $T$ with internal nodes and leaves.
For a sample $x$, the root-to-leaf path to leaf $v$, marked as $path(v)$, passes through edges $e_1, \dots, e_k$.
Each edge $e$ has a \emph{split feature} $f(e)$ and an \emph{edge weight} $w_e = n_{\mathrm{child}} / n_{\mathrm{parent}}$, the proportion of training samples flowing through that edge.

Define the \emph{empty prediction} for leaf $v$ with \emph{leaf value} $val(v)$ as:
\begin{equation}
\label{eq:R_empty}
R_{\emptyset}^{v} = \mathrm{val}(v) \cdot \prod_{e \in \mathrm{path}(v)} w_e\,
\end{equation}
For each feature $i$ on the path, the \emph{marginal multiplier} is:
\begin{equation}
\label{eq:q_def}
q_i^{v} =
\begin{cases}
\prod_{e:\, f(e)=i} 1/w_e & \text{if } x \text{ satisfies all splits on } i, \\
0 & \text{otherwise}.
\end{cases}
\end{equation}
A feature not on the path has $q_i = 1$ and is called a \emph{null player}. Including it in any coalition does not change the prediction. Using the above notation, the Path-Dependent game $f_v:2^{[F]} \to \mathbb{R}$ associated with a leaf $v$, defined over all feature subsets $S \subseteq [F]$, can be written as~\cite{yu2022linear}:

\begin{equation} \label{eq:path_dependent_game}
f_v(S) = R_{S}^{v} = R_{\emptyset}^{v} \cdot \prod_{i \in S} q_i^v
\end{equation}

The combinatorial information along a decision path can be encoded by the following summary polynomial (where $M(v)$ is the set of distinct features on the root-to-leaf path of $v$):
\begin{equation}
G_v(y) := R_\emptyset^v \prod_{j \in M(v)} (q_j^v + y)
\end{equation}

Linear TreeSHAP~\cite{yu2022linear} computes the Shapley contribution $\phi_i^v$ of feature $i \in M(v)$ by removing its factor and applying a weighted linear function $\psi$ to the resulting polynomial. The $\psi$ function is a weighted inner product over polynomial coefficients with $a_s$ the coefficient of $y^s$:
\begin{equation} \label{eq:linear_tree_shap_poly}
\phi_i^v
=
(q_i^v - 1)\, \psi\!\left( \frac{G_v(y)}{q_i^v + y},\, |M(v)| \right),
\quad
\psi(A, m) = \frac{1}{m} \sum_{s=0}^{\deg(A)} \frac{a_s}{\binom{m-1}{s}}
\end{equation} 

In this formulation, the Shapley value reduces to a weighted sum of polynomial coefficients, with weights given by the binomial terms in $\psi$. Linear TreeSHAP represents the polynomial in interpolation form and recovers $\psi$ via a Vandermonde transform, while~\cite{banzhaf_for_decision_trees} operates directly on coefficient vectors. TreeSHAP-IQ extends Formula~\ref{eq:linear_tree_shap_poly} to higher-order interactions and derives a corresponding algorithm.



\section{The Weighted Banzhaf Interaction Values Polynomial}

Banzhaf values~\cite{banzhaf1965weighted}, provide a classical alternative to Shapley values. 
Prior work has studied Banzhaf-based explanations for decision trees~\cite{banzhaf_for_decision_trees,muschalik2024treeshapiq,li2026treegrad, woodelf}, as well as other settings~\cite{databanzhaf,abramovich2023banzhafvaluesfactsquery,liu2025kernelbanzhaffastrobust}.
Formally, for a game $f_v:2^{[F]} \to \mathbb{R}$, the Banzhaf value of player $i$, equals the difference in the expected game's profit, under a uniform distribution over all subsets, with and without $i$:

\begin{equation} \label{banzhaf_expectations_def}
\beta_i(f_v) = \mathbb{E}[f_v(S) \mid i \in S] - \mathbb{E}[f_v(S) \mid i \notin S]
\end{equation}

More generally, the uniform distribution can be replaced with a participation probability $p$, yielding the weighted Banzhaf value~\cite{Radzik1997}, which admits several axiomatic characterizations~\cite{Nowak2000}. Extending this to feature subsets leads to any-order weighted Banzhaf interactions~\cite{weighted_banzhaf}:

\begin{definition}[Any Order Weighted Banzhaf interaction values] \label{def:any_order_wbanzhaf}
For a set function $f_v:2^{[F]} \to \mathbb{R}$ and a subset $S \subseteq [F]$, define the discrete derivative
\[
\Delta_S f_v(T) := \sum_{L \subseteq S} (-1)^{|S|-|L|} f_v(T \cup L)
\]
The order-$|S|$ weighted Banzhaf interaction at inclusion probability $p \in [0,1]$ is then
\[
B_p^{(S)}(v) := \mathbb{E}_{T \sim p}\!\left[ \Delta_S f_v(T) \right]
\]
where each feature in $[F]\setminus S$ is included independently in $T$ with probability $p$. 
\end{definition}

The case $|S|=1$ recovers the standard weighted Banzhaf value, $|S|=2$ corresponds to pairwise interactions, $|S|=3$ captures interactions among three players, and so on. These higher-order objects are the natural Banzhaf-side analogue of Shapley interaction indices~\cite{PB_shap_and_banzhaf, interaction_values}. Our key observation is that, for decision trees, these quantities also admit a simple polynomial form.

\begin{theorem}[Leaf-level weighted Banzhaf polynomial]
\label{theorem:weighted_banzhaf_poly}
Let $v$ be a leaf, let $M(v)$ be the set of features appearing on the path to $v$. 
For $S \subseteq [F]$, the Path-Dependent weighted Banzhaf interaction value of $S$, where each feature in $[F]\setminus S$ is independently included in $T$ with probability $p$, is:
\begin{equation} \label{eq:weighted_banzhaf_iv_v2}
B_{p,v}^{(S)}
=
\begin{cases}
R_\emptyset^v
\left( \displaystyle\prod_{j \in S} (q_j^v - 1) \right)
\displaystyle\prod_{j \in M(v)\setminus S} \big( (1-p) + p q_j^v \big)
& S \subseteq M(v), \\[10pt]
0 & \text{otherwise}.
\end{cases}
\end{equation}
Equivalently, defining $\alpha_j^v := q_j^v - 1$ and 
$H_v(p) := \prod_{j \in M(v)} (1 + \alpha_j^v p)$, we obtain:

\begin{equation} \label{eq:weighted_banzhaf_iv}
B_{p,v}^{(S)}
=
\begin{cases}
R_\emptyset^v
\left( \displaystyle\prod_{j \in S} \alpha_j^v \right)
\displaystyle\frac{H_v(p)}{\prod_{j \in S} (1 + \alpha_j^v p)}
& S \subseteq M(v), \\[12pt]
0 & \text{otherwise}.
\end{cases}
\end{equation}
\end{theorem}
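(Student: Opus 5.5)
The plan is to exploit the product structure of the Path-Dependent game in \eqref{eq:path_dependent_game}: $f_v(S)=R_\emptyset^v\prod_{i\in S}q_i^v$, where features outside $M(v)$ have $q_i^v=1$. We first compute the discrete derivative $\Delta_S f_v(T)$ in closed form for a fixed $T\subseteq [F]\setminus S$. Then we take the expectation over $T$, using the independence of the inclusion events.

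\textbf{Step 1: the discrete derivative.} For $T$ disjoint from $S$ we have $f_v(T\cup L)=R_\emptyset^v\prod_{i\in T}q_i^v\prod_{i\in L}q_i^v$. Hence
\[
\Delta_S f_v(T)=R_\emptyset^v\prod_{i\in T}q_i^v\sum_{L\subseteq S}(-1)^{|S|-|L|}\prod_{i\in L}q_i^v .
\]
The alternating sum is exactly the expansion of $\prod_{j\in S}(q_j^v-1)$, by distributing the product and choosing, for each $j\in S$, either $q_j^v$ or $-1$. This is the only genuinely combinatorial step. If some $j\in S$ lies outside $M(v)$, then $q_j^v=1$, the factor vanishes, and $B^{(S)}_{p,v}=0$. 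This gives the second case of the theorem.

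\textbf{Step 2: the expectation.} Now assume $S\subseteq M(v)$. The remaining factor $\prod_{i\in T}q_i^v$ equals $\prod_{j\in[F]\setminus S}(q_j^v)^{\mathbf 1[j\in T]}$. The indicators are independent Bernoulli($p$), so the expectation factorizes as
\[
\prod_{j\in[F]\setminus S}\bigl((1-p)+p\,q_j^v\bigr).
\]
Null players $j\notin M(v)$ contribute $(1-p)+p=1$, so the product reduces to $M(v)\setminus S$. This gives \eqref{eq:weighted_banzhaf_iv_v2}.

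\textbf{Step 3: the reformulation.} With $\alpha_j^v=q_j^v-1$ we have $(1-p)+pq_j^v=1+\alpha_j^v p$. The product over $M(v)\setminus S$ therefore equals
\[
H_v(p)\Big/\prod_{j\in S}(1+\alpha_j^v p),
\]
which is \eqref{eq:weighted_banzhaf_iv}. The division is understood as removing the corresponding factors, so it remains valid even when a factor $1+\alpha_j^v p$ is zero, for instance when $q_j^v=0$ and $p=1$.

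The main point needing care is Step 1: the definition of $\Delta_S$ takes $T$ ranging over $[F]\setminus S$, and the signs must match the expansion of the product. Beyond that, the argument is routine.
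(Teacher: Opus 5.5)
Your proof is correct and follows the paper's route: compute $\Delta_S f_v(T)$ in closed form by recognizing the alternating sum as the expansion of $\prod_{j\in S}(q_j^v-1)$, then factor the expectation over the independent $\mathrm{Bernoulli}(p)$ inclusions to get $\prod_{j\in M(v)\setminus S}\bigl((1-p)+p\,q_j^v\bigr)$. The differences are cosmetic. You dispose of the case $S\not\subseteq M(v)$ through the null-player convention $q_j^v=1$, so the factor $q_j^v-1$ vanishes, whereas the paper restricts products to $M(v)$ and uses a separate pairing argument (Lemma~\ref{lemma_subset_expansion_null_player}). Your remark that the quotient $H_v(p)/\prod_{j\in S}(1+\alpha_j^v p)$ means removing factors, so it stays valid when $1+\alpha_j^v p=0$, is a small extra point the paper does not address.
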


\paragraph{Proof sketch.} The proof proceeds in two steps. First, we derive a closed-form expression for the discrete derivative $\Delta_S f_v(T)$ by expanding the alternating sum and using the product structure of the leaf rule. We then take expectation over $T$ under participation probability $p$, and use linearity of expectations and independence of feature inclusion to obtain the stated polynomial form. The full proof is provided in Appendix~\ref{sec:weighted_banzhaf_proof}.

\section{From Weighted Banzhaf Interactions to Shapley Interactions}

The weighted Banzhaf family is especially useful because Shapley quantities can be recovered by integrating over the participation probability $p \in [0,1]$. For first-order values, this follows from classical results on multilinear extensions and probabilistic values~\cite{owen1972multilinear}.
More generally, this relation extends to higher-order interactions as part of the probabilistic interaction framework~\cite{weighted_banzhaf}: 
\begin{equation} \label{eq:ao_shap_is_intergral_of_ao_banzhaf}
\mathrm{SII}^{(S)}(f_v) = \int_0^1 B_p^{(S)}(f_v)\, dp
\end{equation}
Here, $\mathrm{SII}^{(S)}(f_v)$ denotes the order-$|S|$ Shapley interaction value. For completeness, we prove Eq.~\eqref{eq:ao_shap_is_intergral_of_ao_banzhaf} in Appendix~\ref{sec:appendix_sii_from_weighted_banzhaf}.

This observation is algorithmically important because the integral over $[0,1]$ can be computed very efficiently with Gauss--Legendre quadrature. If $B_p^{(S)}$ is a polynomial in $p$ of degree at most $D$, then the corresponding Shapley quantity can be computed exactly as a weighted sum of weighted Banzhaf interaction values evaluated at fixed quadrature nodes. Let $\{(t_m,w_m)\}_{m=1}^n$ be the Gauss--Legendre nodes and weights on $[0,1]$ (exact when $2n-1 \ge D$~\cite{davis1984numerical,golub1969calculation}). Applying this at the leaf level using the weighted Banzhaf polynomial (Eq.~\eqref{eq:weighted_banzhaf_iv}) yields:
\begin{equation} \label{eq:shap_as_weighted_banzhafs}
\mathrm{SII}_v^{(S)}
=
\int_0^1 B_{p,v}^{(S)}\, dp
\approx
\sum_{m=1}^{n} w_m \, B_{t_m, v}^{(S)}=R_\emptyset^v
\left( \prod_{j \in S} \alpha_j^v \right)
\sum_{m=1}^{n} w_m
\frac{H_v(t_m)}{\prod_{j \in S} (1 + \alpha_j^v t_m)}
\end{equation}

TreeGrad~\cite{li2026treegrad} was the first tree-explanation method to adopt this viewpoint. It operates on the gradients of the multilinear extension, which correspond to first-order weighted Banzhaf values. In fact, Lemma~1 in TreeGrad is the first-order special case of our weighted Banzhaf formula in Theorem~\ref{theorem:weighted_banzhaf_poly}.

Our weighted-Banzhaf polynomial extends directly to arbitrary interaction orders, enabling the same quadrature framework to compute both Shapley values and higher-order interactions. Moreover, this representation shows that the tree-induced integrand is a low-degree polynomial (see Eq.~\eqref{eq:weighted_banzhaf_iv_v2}), explaining why quadrature achieves high accuracy in both our method and TreeGrad.

\section{Our algorithm}

Our algorithm evaluates Eq.~\eqref{eq:shap_as_weighted_banzhafs} for all order-$s$ interactions and all leaves simultaneously by a single depth-first traversal and sums the resulting leaf contributions on the fly. The DFS propagates, for each node, the values of the path polynomial at a fixed set of Gauss--Legendre nodes and merges child subtrees by pointwise addition. In practice we use a fixed set of $8$ Gauss--Legendre nodes on $[0,1]$ and propagate all eight traces during the traversal. Sect.~\ref{sec:quadrature_point_requirements} and~\ref{sec:results} explain why 8 points suffice.

For repeated features, we use the telescoping idea originally proposed by Linear TreeSHAP, but apply it to arbitrary interaction order. Let $e$ be a split on feature $i$ with head $h(e)$, let $p_e$ be the accumulated multiplier at edge $e$, let $p_{e^\uparrow}$ be the value at the closest ancestor edge on the same feature, and let $p_j$ denote the current live multiplier of feature $j$ on the active path. Then for any set $S$ of size $s-1$ drawn from the active path and not containing $i$, the child subtree contributes:
\begin{equation}
\label{eq:our_telescoping_background}
SII^{(S \cup \{i\})} \mathrel{+}= \sum_{m=1}^{n} w_m \, H_{h(e)}(t_m)
\left(
\frac{p_e - 1}{1 + (p_e - 1)t_m}
-
\frac{p_{e^\uparrow} - 1}{1 + (p_{e^\uparrow} - 1)t_m}
\right)
\prod_{j \in S} \frac{p_j - 1}{1 + (p_j - 1)t_m}
\end{equation}
When $S = \emptyset$, Eq.~\eqref{eq:our_telescoping_background} reduces to the first-order update. The pseudocode for the general order-$s$ algorithm is provided in Alg~\ref{alg:our_dfs}.

\begin{algorithm}[!t]
\caption{The \methodname{} algorithm for order-$s$ interactions}
\label{alg:our_dfs}
\SetKwFunction{DFS}{DFS}
\SetKwProg{Fn}{Function}{:}{}
\KwIn{Tree $T$, sample $x$, interaction order $s$, fixed Gauss--Legendre nodes $\{t_m, w_m\}_{m=1}^{n}$}
\KwOut{All order-$s$ Shapley interaction values $\Phi[S]$ for $S \subseteq [F]$ with $|S|=s$}
$\Phi[\cdot] \gets 0$; \quad $p[\cdot] \gets \texttt{UNSEEN}$; \quad $A \gets \emptyset$\;
\Fn{\DFS{$v$, $\mathbf{c}$, $w_{\mathrm{prod}}$}}{
  \If{$v$ is a leaf}{
    \Return $\mathbf{c} \cdot \mathrm{val}(v) \cdot w_{\mathrm{prod}}$\;
  }
  Let $f \gets \mathrm{feature}(v)$, children $l,r$\;
  \ForEach{child $u \in \{l,r\}$}{
    $w_e \gets n_u/n_v$; \quad $s_e \gets [x \text{ satisfies edge } v{\to}u]$\;
    \uIf{$p[f] = \texttt{UNSEEN}$}{
      $p_e \gets 1/w_e$ if $s_e$, else $0$; \quad $p_{\uparrow} \gets 1$\;
    }
    \uElseIf{$p[f] = 0$}{
      $p_e \gets 0$; \quad $p_{\uparrow} \gets 0$\;
    }
    \Else{
      $p_e \gets p[f]/w_e$ if $s_e$, else $0$; \quad $p_{\uparrow} \gets p[f]$\;
    }
    $\alpha_e \gets p_e - 1$\;
    $\mathbf{c}' \gets \mathbf{c} \odot (1 + \alpha_e \, \mathbf{t})$\;
    \If{$p[f] \neq \texttt{UNSEEN}$ \textbf{and} $|p[f]-1| > \epsilon$}{
      $\mathbf{c}' \gets \mathbf{c}' \oslash (1 + (p[f]-1)\,\mathbf{t})$\;
    }
    Save $p[f]$ and whether $f \in A$; set $p[f] \gets p_e$; add $f$ to $A$\;
    $\mathbf{H}_u \gets$ \DFS{$u$, $\mathbf{c}'$, $w_{\mathrm{prod}}\cdot w_e$}\;
    $\boldsymbol{\delta}_e \gets \frac{\alpha_e}{1+\alpha_e\mathbf{t}} - \frac{p_{\uparrow}-1}{1+(p_{\uparrow}-1)\mathbf{t}}$\;
    \ForEach{$(s-1)$-subset $P \subseteq A \setminus \{f\}$}{
      $\boldsymbol{\gamma}_P \gets \displaystyle\prod_{j \in P} \frac{p[j]-1}{1+(p[j]-1)\mathbf{t}}$\;
      $\Phi[P \cup \{f\}] \mathrel{+}= \sum_{m=1}^{n} w_m \, \mathbf{H}_u[m] \, \boldsymbol{\delta}_e[m] \, \boldsymbol{\gamma}_P[m]$\;
    }
    Restore $p[f]$; if $f$ was not previously active, remove $f$ from $A$\;
  }
  \Return $\mathbf{H}_l + \mathbf{H}_r$\;
}
\DFS{root, $\mathbf{1}_n$, $1.0$}\;
\end{algorithm}

In the first-order case, our algorithm is algebraically equivalent to TreeGrad-Shap~\cite{li2026treegrad} (specifically, TreeGrad's vectorized Algorithm 2). 
The difference lies not in the computation itself, but in the derivation and scope: TreeGrad derives the method via gradients of the multilinear extension and is limited to first-order values, whereas our approach is based on the weighted-Banzhaf polynomial and reveals that this is simply the first-order case of a more general interaction-values framework. 

\section{Quadrature Point Requirements}
\label{sec:quadrature_point_requirements}

We established that Shapley values and interactions can be computed with Gauss--Legendre quadrature. A natural next question is how many quadrature points are needed. This section addresses that question from both a theoretical and a practical perspective.

\paragraph{Theoretical analysis}
For first-order values, TreeGrad shows that Gauss-Legendre quadrature with $\lceil D/2 \rceil$ points suffices. Their Shapley algorithm uses the tighter bound $\lceil \min(D, F)/2 \rceil$, where $F$ is the number of features. This can be further tightened to $\lceil d/2 \rceil$, where $d$ is the maximum number of unique features along a root-to-leaf path. 

We extend this bound to arbitrary interaction orders, showing that the order-$s$ Shapley interaction integrand has lower degree than the first-order one. For example, if $d=10$, then first-order values require $5$ points, while third-order interactions require only $\lceil(10-3+1)/2\rceil = 4$ points.

\begin{theorem}[Decreasing degree for order-$s$ interactions]
\label{thm:decreasing_degree_main}
Let $S \subseteq [F]$ with $|S| = s$. The integrand for the order-$s$ Shapley interaction value $\mathrm{SII}(S)$ is a polynomial of degree at most $d - s$. Therefore, Gauss--Legendre quadrature is exact whenever the number of points satisfies:
\begin{equation}
\label{eq:n_min_main}
n \geq \left\lceil \frac{d - s + 1}{2} \right\rceil
\end{equation}
\end{theorem}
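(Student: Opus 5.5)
The plan is to work at the level of a single leaf $v$ and then sum. By Eq.~\eqref{eq:ao_shap_is_intergral_of_ao_banzhaf} and linearity of $\mathrm{SII}^{(S)}$ in the game, the tree-level integrand is
\[
B_p^{(S)} = \sum_{v} B_{p,v}^{(S)} .
\]
It therefore suffices to show that each leaf-level term is a polynomial in $p$ of degree at most $d-s$, because a sum of such polynomials is again one. The quadrature claim then follows from the standard exactness property already cited before Eq.~\eqref{eq:shap_as_weighted_banzhafs}: $n$-point Gauss--Legendre quadrature on $[0,1]$ is exact for all polynomials of degree at most $2n-1$. Requiring $2n-1 \ge d-s$ gives $n \ge (d-s+1)/2$, and since $n$ is an integer this is $n \ge \lceil (d-s+1)/2\rceil$, which is Eq.~\eqref{eq:n_min_main}.

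For the leaf-level degree bound I will invoke Theorem~\ref{theorem:weighted_banzhaf_poly} in its first form, Eq.~\eqref{eq:weighted_banzhaf_iv_v2}, and split into the two cases of the theorem.

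\textbf{Case $S \not\subseteq M(v)$.} Then $B_{p,v}^{(S)} \equiv 0$, which trivially has degree at most $d-s$.

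\textbf{Case $S \subseteq M(v)$.} The factor $R_\emptyset^v \prod_{j\in S}(q_j^v-1)$ does not depend on $p$. The remaining factor is
\[
\prod_{j \in M(v)\setminus S} \bigl(1 + (q_j^v - 1)p\bigr),
\]
a product of $|M(v)|-s$ factors, each affine in $p$. Its degree is thus at most $|M(v)| - s$. Since $M(v)$ is the set of distinct features on the root-to-leaf path, $|M(v)| \le d$ by the definition of $d$, so the degree is at most $d-s$.

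The only point needing care, and the one I expect to be the main obstacle, is that the integrand must genuinely be a polynomial. The second form, Eq.~\eqref{eq:weighted_banzhaf_iv}, writes it as a ratio $H_v(p)/\prod_{j\in S}(1+\alpha_j^v p)$, and there factors with $\alpha_j^v=-1$ (i.e.\ $q_j^v=0$) could make the division look singular at $p=1$. I will avoid this by working exclusively with Eq.~\eqref{eq:weighted_banzhaf_iv_v2}, where the cancellation is already explicit and no division occurs. I will also note that this bound concerns the exact integrand of Eq.~\eqref{eq:shap_as_weighted_banzhafs}, not the algorithm's intermediate quotient representation. Finally, for $s > d$ every leaf falls into the zero case, so the statement holds vacuously with any $n \ge 1$.
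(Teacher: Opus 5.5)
Your proposal is correct and follows essentially the same route as the paper: bound each leaf's integrand by degree $|M(v)|-s \le d-s$ using the weighted-Banzhaf polynomial, sum over leaves, and then apply the $2n-1$ exactness of Gauss--Legendre quadrature. The only cosmetic difference is that you read the degree directly off Eq.~\eqref{eq:weighted_banzhaf_iv_v2}, while the paper starts from the quotient form Eq.~\eqref{eq:weighted_banzhaf_iv} and cancels the factors indexed by $S$.
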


\noindent\emph{Proof sketch.} For a fixed interaction set, each leaf contribution is obtained by canceling the linear factors associated with the interacting features, so every requested interaction lowers the degree by one (see Eq.~\eqref{eq:weighted_banzhaf_iv}). Since a root-to-leaf path can involve at most $d$ distinct features, after cancellation each leaf term has degree at most $d-s$, and summing over leaves does not increase that bound.  Gauss--Legendre exactness then yields the result. See full proof in Appendix~\ref{sec:quadrature_points_proof}.

\paragraph{Practical analysis}
In practice, we do not need accuracy beyond the floating-point noise floor: the smallest perturbation that can be meaningfully represented and propagated by the target arithmetic. For IEEE 754 single precision, which is the output format returned here and in TreeSHAP, the unit roundoff is $u=2^{-24}\approx 5.96\times 10^{-8}$, so improvements far below roughly $10^{-7}$ are no longer reliably observable in the final \texttt{float32} result~\cite{goldberg1991floating,higham2002accuracy}. Once the quadrature error is below this scale, using more quadrature points brings no practical benefit.

Gauss--Legendre quadrature is also known to converge very rapidly for smooth integrands and to be exact for polynomials up to degree $2n-1$~\cite{davis1984numerical,golub1969calculation}. Our integrands inherit this favorable structure from the tree path polynomial. Empirically we find that $8$ quadrature points are enough. In other words, although Theorem~\ref{thm:decreasing_degree_main} gives a worst-case bound, the practical requirement is smaller, and experiments show that $8$ points consistently reach the numerical noise floor.

\section{SIMD and GPU Friendly approach}

SIMD (Single Instruction, Multiple Data)~\cite{SIMD} is a parallel computing paradigm in which a single instruction operates on multiple data points simultaneously. This is achieved by packing several values (e.g. float32) into a wide register. Prior work has leveraged SIMD to accelerate decision tree inference and related tasks~\cite{SIMD_trees, SIMD_trees_microsoft}.

The same idea extends to GPUs, where it is known as SIMT (Single Instruction, Multiple Threads), with computation organized into lanes grouped into warps and executed simultaneously under a single instruction~\cite{GPUTreeShap,cuda}. GPUTreeSHAP~\cite{GPUTreeShap} is designed with this constraint in mind. It decomposes the computation into root-to-leaf path subproblems and applies a bin-packing step to group similar-sized tasks into warps, keeping GPU lanes well utilized. While effective, this approach introduces preprocessing and scheduling overhead before the SHAP computation itself.

Our quadrature formulation is well suited to SIMD and SIMT. Because the computation reduces to evaluating the same fixed set of $8$ quadrature nodes for every path, the arithmetic is regular and fully utilizes vector lanes. In our implementation, we vectorize $4$ points in parallel on CPU (using SSE instructions\footnote{Longer 8 word vector instructions are not used in XGBoost due to portability.}) and $32$ points in parallel on GPU (the size of a warp). This fixed-width structure also simplifies the GPU algorithm, eliminating the binning strategy used by GPUTreeSHAP and yielding significant speedups, as shown in Sect.~\ref{sec:results}.

\section{Experimental Results}
\label{sec:results}

\begin{table*}[!t]
\caption{Benchmark XGBoost ensembles. Small and large models use breadth-first growth, while sparse models use depth-first growth with a 512-leaf cap. Here $D$ denotes realized maximum depth and $d$ denotes realized maximum unique-feature depth.}
\hspace{0.5px}
\centering
\scriptsize
\setlength{\tabcolsep}{4pt}
\resizebox{\textwidth}{!}{%
\begin{tabular}{llrllrrr}
\toprule
Dataset & Setting & Trees & Growth & Target & $D$ & $d$ & Mean leaves/tree \\
\midrule
Adult & small & 10 & breadth-first & max depth 6 & 6 & 6 & 40.1 \\
Adult & large & 1000 & breadth-first & max depth 16 & 16 & 14 & 613.3 \\
Adult & sparse & 100 & depth-first & 512 leaves & 28 & 13 & 512.0 \\
\midrule
CalHousing & small & 10 & breadth-first & max depth 6 & 6 & 5 & 64.0 \\
CalHousing & large & 1000 & breadth-first & max depth 16 & 16 & 8 & 3702.8 \\
CalHousing & sparse & 100 & depth-first & 512 leaves & 20 & 8 & 512.0 \\
\midrule
CovType & small & 80 & breadth-first & max depth 6 & 6 & 6 & 4.9 \\
CovType & large & 8000 & breadth-first & max depth 16 & 16 & 14 & 2.4 \\
CovType & sparse & 800 & depth-first & 512 leaves & 24 & 18 & 4.0 \\
\midrule
Fashion-MNIST & small & 100 & breadth-first & max depth 6 & 6 & 6 & 57.5 \\
Fashion-MNIST & large & 10000 & breadth-first & max depth 16 & 16 & 16 & 295.7 \\
Fashion-MNIST & sparse & 1000 & depth-first & 512 leaves & 52 & 48 & 412.4 \\
\bottomrule
\end{tabular}%
}
\label{tab:benchmark_models}
\end{table*}

We implemented \methodname{} in C++ and on GPU for first- and second-order interactions, and in Python for higher-order cases. We evaluate \methodname{} on 12 XGBoost benchmark ensembles spanning four datasets and three regimes per dataset (small, large, and sparse). The datasets are Adult~\cite{adult_dataset} (48,842 rows, 14 features), CalHousing~\cite{calhousing_dataset} (20,640 rows, 8 features), CovType~\cite{covtype_dataset} (581,012 rows, 54 features), and Fashion-MNIST~\cite{fashion_mnist_dataset} (70,000 rows, 784 features). Table~\ref{tab:benchmark_models} summarizes the benchmark models. 

The first-order benchmark explains 1000 rows and reports standard Shapley values, while the second-order benchmark explains 100 rows and reports pairwise Shapley interaction values. These two benchmarks compare the XGBoost implementations of TreeSHAP, GPUTreeSHAP, and \methodname{}. The higher-order Python experiments reported below instead use a single explained row and compare Python implementations of TreeSHAP-IQ and \methodname{}: Table~\ref{tab:higher_order_python} reports third-order interaction runtimes across benchmark models, and Figure~\ref{fig:higher_order_python_scaling} shows CovType-sparse scaling from orders $2$ to $6$.

Appendix~\ref{sec:appendix_treegrad_python} additionally reports a direct first-order Python comparison against TreeGrad-Shap, but we compare primarily against TreeSHAP in the main text because the XGBoost/\texttt{shap} implementation is, in practice, the fastest available. All runtimes below are wall-clock times; missing entries are marked as ``--'' and denote either runs that exceeded the 600-second timeout or, for GPUTreeSHAP, models with depth greater than 32. The benchmarks were run on a machine with an Intel Xeon E5-2698 v4 @ 2.20GHz CPU and an NVIDIA Tesla V100-SXM2-32GB GPU.

\begin{table*}[!t]
\caption{First-order runtimes on 1000 explained rows. Times are wall-clock seconds; speedups are relative to TreeSHAP or GPUTreeSHAP.}
\hspace{0.5px}
\centering
\scriptsize
\setlength{\tabcolsep}{4pt}
\resizebox{\textwidth}{!}{%
\begin{tabular}{llrrrrrr}
\toprule
& & \multicolumn{3}{c}{CPU} & \multicolumn{3}{c}{GPU} \\
\cmidrule(lr){3-5}\cmidrule(lr){6-8}
Dataset & Setting & TreeSHAP & \methodshort{} & Speedup & GPUTreeSHAP & \methodshort{} & Speedup \\
\midrule
Adult & small & 0.004 & 0.002 & 2.26 & 0.009 & 0.001 & 6.82 \\
Adult & large & 7.842 & 1.308 & 6.00 & 1.615 & 0.602 & 2.68 \\
Adult & sparse & 0.609 & 0.108 & 5.64 & 0.128 & 0.051 & 2.51 \\
\midrule
CalHousing & small & 0.003 & 0.003 & 1.06 & 0.008 & 0.001 & 6.95 \\
CalHousing & large & 29.034 & 7.681 & 3.78 & 8.390 & 2.705 & 3.10 \\
CalHousing & sparse & 0.341 & 0.103 & 3.33 & 0.085 & 0.035 & 2.47 \\
\midrule
CovType & small & 0.004 & 0.002 & 1.87 & 0.008 & 0.001 & 6.00 \\
CovType & large & 0.178 & 0.031 & 5.73 & 0.036 & 0.019 & 1.84 \\
CovType & sparse & 0.036 & 0.009 & 3.90 & 0.019 & 0.004 & 5.55 \\
\midrule
Fashion-MNIST & small & 0.046 & 0.021 & 2.16 & 0.023 & 0.012 & 2.01 \\
Fashion-MNIST & large & 51.800 & 5.639 & 9.19 & 14.140 & 2.431 & 5.82 \\
Fashion-MNIST & sparse & 8.299 & 0.783 & 10.59 & -- & 0.372 & -- \\
\bottomrule
\end{tabular}%
}
\label{tab:first_order_results}
\end{table*}

\begin{table*}[!t]
\caption{Second-order runtimes on 100 explained rows. Times are wall-clock seconds; speedups are relative to TreeSHAP or GPUTreeSHAP.}
\hspace{0.5px}
\centering
\scriptsize
\setlength{\tabcolsep}{4pt}
\resizebox{\textwidth}{!}{%
\begin{tabular}{llrrrrrr}
\toprule
& & \multicolumn{3}{c}{CPU} & \multicolumn{3}{c}{GPU} \\
\cmidrule(lr){3-5}\cmidrule(lr){6-8}
Dataset & Setting & TreeSHAP & \methodshort{} & Speedup & GPUTreeSHAP & \methodshort{} & Speedup \\
\midrule
Adult & small & 0.190 & 0.010 & 19.68 & 0.008 & 0.001 & 6.29 \\
Adult & large & 32.542 & 2.415 & 13.48 & 3.057 & 0.460 & 6.65 \\
Adult & sparse & 2.643 & 0.207 & 12.77 & 0.216 & 0.038 & 5.71 \\
\midrule
CalHousing & small & 0.018 & 0.004 & 4.67 & 0.007 & 0.001 & 6.51 \\
CalHousing & large & 59.608 & 15.707 & 3.80 & 9.611 & 2.609 & 3.68 \\
CalHousing & sparse & 0.903 & 0.211 & 4.28 & 0.096 & 0.033 & 2.93 \\
\midrule
CovType & small & 0.059 & 0.005 & 12.95 & 0.012 & 0.013 & 0.89 \\
CovType & large & 2.838 & 0.049 & 58.11 & 0.068 & 0.159 & 0.43 \\
CovType & sparse & 0.686 & 0.013 & 51.21 & 0.041 & 0.027 & 1.48 \\
\midrule
Fashion-MNIST & small & 15.549 & 2.052 & 7.58 & 1.764 & 3.975 & 0.44 \\
Fashion-MNIST & large & \multicolumn{1}{c}{--} & 12.923 & \multicolumn{1}{c}{--} & 35.208 & 7.161 & 4.92 \\
Fashion-MNIST & sparse & \multicolumn{1}{c}{--} & 3.355 & \multicolumn{1}{c}{--} & \multicolumn{1}{c}{--} & 4.514 & \multicolumn{1}{c}{--} \\
\bottomrule
\end{tabular}%
}
\label{tab:second_order_results}
\end{table*}

\paragraph{First-order TreeSHAP}
Table~\ref{tab:first_order_results} reports first-order runtimes. On CPU, \methodname{} is faster than TreeSHAP on all 12 benchmarks, with speedups ranging from $1.06\times$ to $10.59\times$ and a median speedup of $3.84\times$. The largest first-order CPU gains appear on the Fashion-MNIST large and sparse ensembles, while even the smallest gain on CalHousing-small remains slightly favorable. On GPU, \methodname{} outperforms GPUTreeSHAP on every benchmark with a reported baseline, with speedups between $1.84\times$ and $6.95\times$.

\paragraph{Second-order TreeSHAP}
Table~\ref{tab:second_order_results} reports pairwise interaction runtimes. The CPU advantage widens substantially for second-order explanations: among the 10 benchmarks with a reported TreeSHAP baseline, speedups range from $3.80\times$ to $58.11\times$, with a median speedup of $12.86\times$. The largest CPU gains occur on the CovType large and sparse models, with consistently strong gains across the Adult benchmarks, where the fixed-size quadrature computation avoids the rapid growth in interaction overhead. On GPU, \methodname{} is faster on 8 of the 11 benchmarks with reported measurements and reaches $6.65\times$ speedup, although several workloads still favor GPUTreeSHAP, notably CovType-small, CovType-large, and Fashion-MNIST-small.

\paragraph{Numerical stability}
Figure~\ref{fig:efficiency_error_vs_depth} examines the most numerically challenging benchmark in Table~\ref{tab:benchmark_models}, Fashion-MNIST-sparse, by gradually increasing the maximum depth of the learned trees and measuring the resulting violation of the efficiency property for first-order Shapley values. TreeSHAP becomes unstable beyond depth $32$, whereas \methodname{} remains remarkably stable; even $6$ points give acceptable accuracy, and we see no practical improvement beyond $8$, underscoring the effectiveness of Gauss--Legendre quadrature for these smooth integrals. Although these experiments focus on first-order attributions, higher-order interactions are numerically easier in our framework: Theorem~\ref{thm:decreasing_degree_main} shows they require no more quadrature points than first-order values, and often fewer.

The instability of TreeSHAP is structural. Per leaf, TreeSHAP works in the monomial basis, whose coefficients scale binomially, and recovers each Shapley value as a Bernstein-weighted alternating sum of those coefficients, so any rounding in the inflated monomial terms surfaces as catastrophic cancellation, and these errors accumulate across the leaves of a deep ensemble. \methodname{} sidesteps the basis entirely: it evaluates the summary polynomial at a handful of Gauss--Legendre nodes in $[0,1]$, where every factor is bounded and positive so the product is forward-stable and no cancellation occurs, which is why $6$--$8$ quadrature points already achieves high accuracy. 

TreeSHAP numerical instability is also discussed and analyzed in~\cite{banzhaf_for_decision_trees}. That work shows that, for tree models, Banzhaf values often provide similar explanations to Shapley values while being more numerically robust. This provides additional intuition for the numerical stability of our approach: by expressing Shapley values as an average of weighted Banzhaf values (see Eq.~\eqref{eq:shap_as_weighted_banzhafs}), our method retains the Shapley target while benefiting from the numerical stability of Banzhaf-style computations.

\begin{center}
\begin{minipage}[t]{0.49\textwidth}
\vspace{0pt}
\centering
\includegraphics[width=\linewidth]{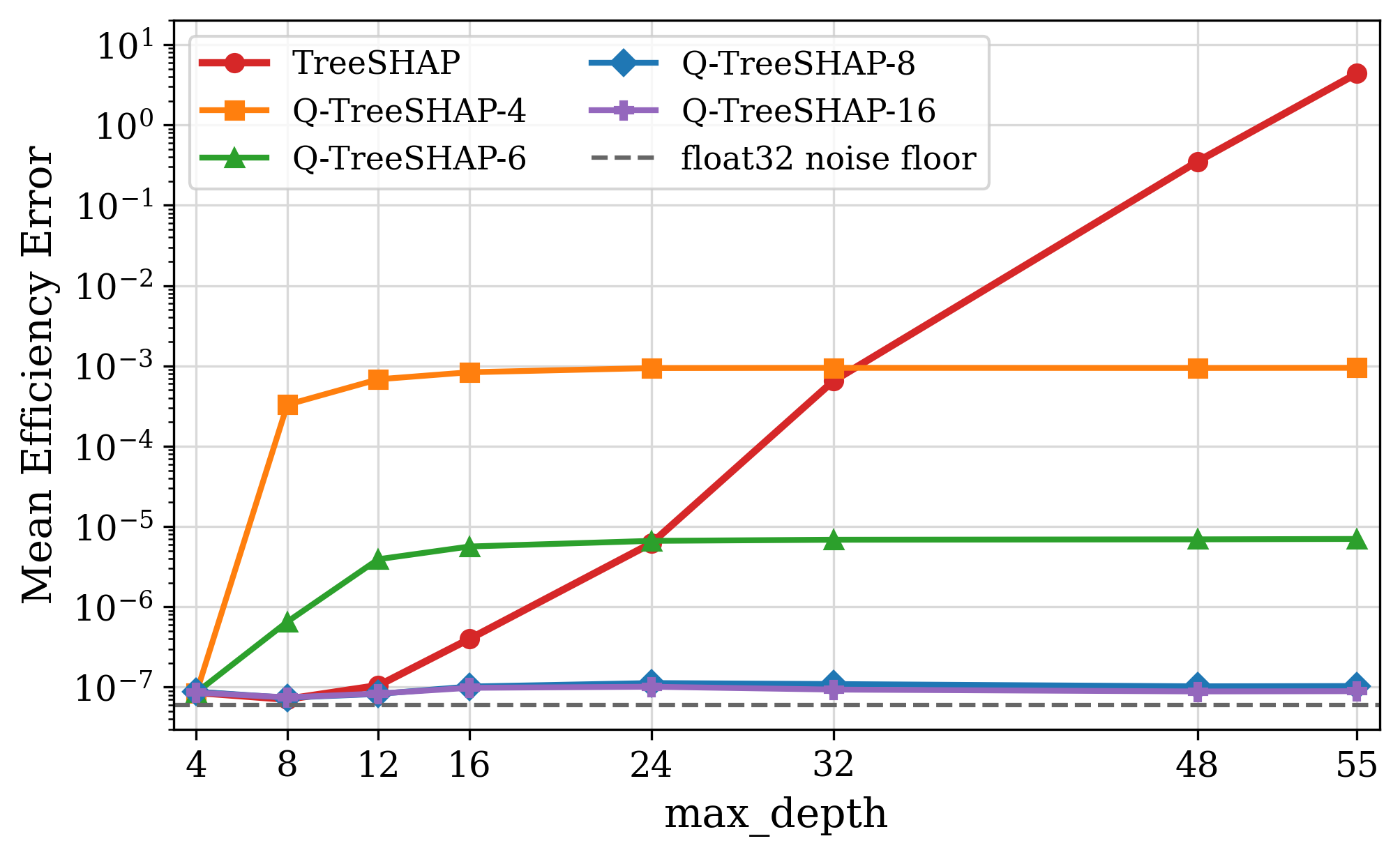}
\captionof{figure}{First-order numerical error on Fashion-MNIST-sparse vs. \texttt{max\_depth}.}
\label{fig:efficiency_error_vs_depth}
\end{minipage}\hfill
\begin{minipage}[t]{0.49\textwidth}
\vspace{0pt}
\centering
\includegraphics[width=\linewidth]{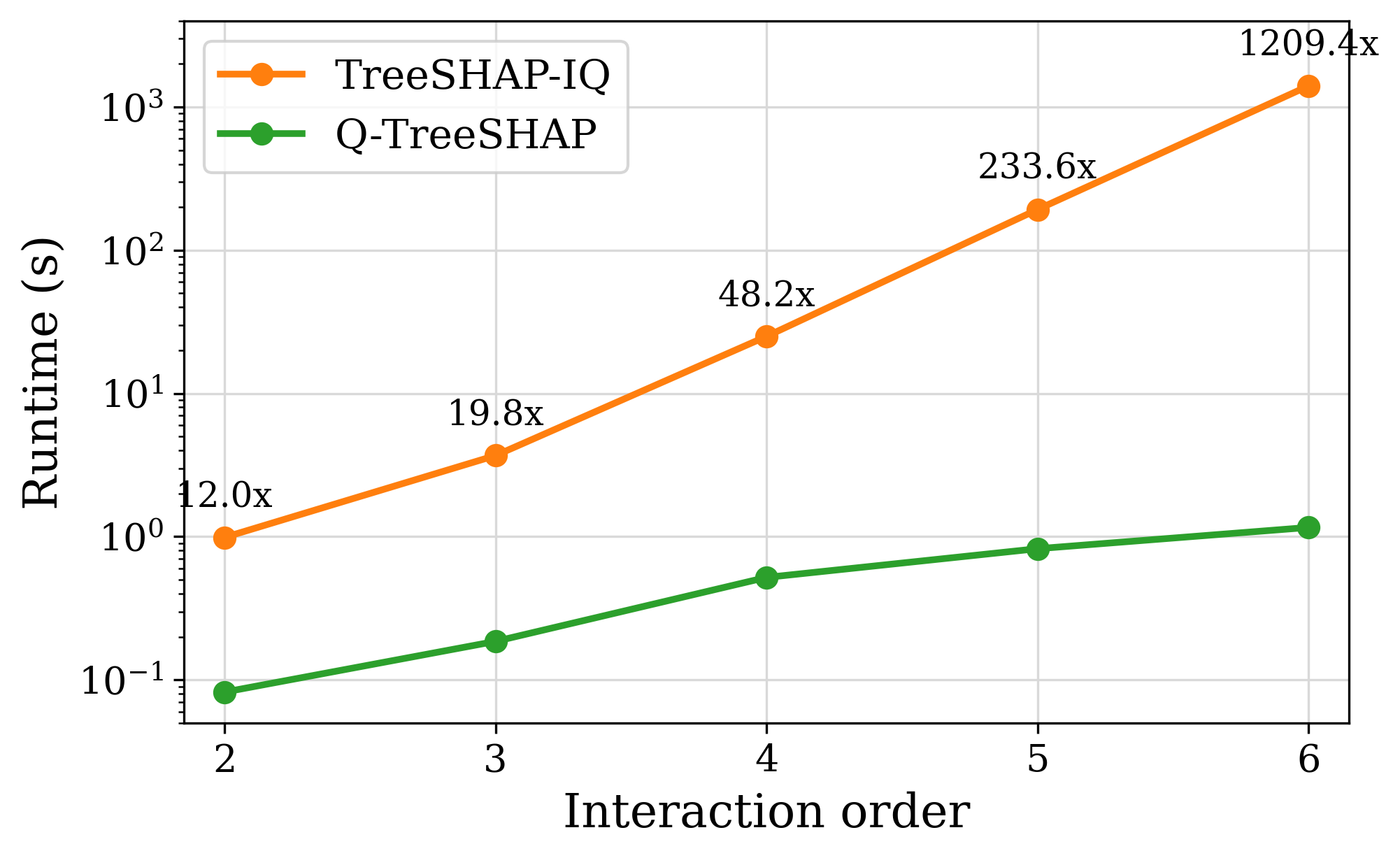}
\captionof{figure}{CovType-sparse runtime versus interaction order on a single explained row for TreeSHAP-IQ and \methodname{}.}
\label{fig:higher_order_python_scaling}
\end{minipage}
\end{center}

\begin{table}[!t]
\centering
\caption{Third-order interaction runtimes on a single explained row. Missing TreeSHAP-IQ entries indicate models with categorical XGBoost splits, which are unsupported by the available TreeSHAP-IQ adapter. CalHousing-large is excluded as both methods reached the 600s timeout.}
\label{tab:higher_order_python}
\footnotesize
\setlength{\tabcolsep}{4pt}
\begin{tabular}{lrrrrrr}
\toprule
Benchmark & $F$ & $D$ & $d$ & TreeSHAP-IQ & \methodshort{} & Speedup \\
\midrule
Adult-small & 14 & 6 & 6 & \multicolumn{1}{c}{--} & 0.010 & \multicolumn{1}{c}{--} \\
Adult-large & 14 & 16 & 14 & \multicolumn{1}{c}{--} & 37.616 & \multicolumn{1}{c}{--} \\
Adult-sparse & 14 & 28 & 13 & \multicolumn{1}{c}{--} & 2.873 & \multicolumn{1}{c}{--} \\
CalHousing-small & 8 & 6 & 5 & 0.253 & 0.014 & 18.73 \\
CalHousing-sparse & 8 & 20 & 8 & 30.027 & 1.362 & 22.04 \\
CovType-small & 54 & 6 & 6 & 0.182 & 0.009 & 20.90 \\
CovType-large & 54 & 16 & 14 & 25.489 & 0.962 & 26.49 \\
CovType-sparse & 54 & 24 & 18 & 3.685 & 0.186 & 19.84 \\
\bottomrule
\end{tabular}
\end{table}

\paragraph{Higher-order interactions}
To evaluate interaction orders beyond the pairwise XGBoost benchmark, we compare Python implementations of TreeSHAP-IQ and \methodname{} on the benchmark models (excluding Fashion-MNIST which has too many features). Table~\ref{tab:higher_order_python} reports third-order interaction runtimes on a single explained row. \methodname{} is consistently faster, with speedups ranging from $18.73\times$ to $26.49\times$; Adult models are shown without a TreeSHAP-IQ baseline because the available adapter does not support categorical XGBoost splits.

Figure~\ref{fig:higher_order_python_scaling} shows the CovType-sparse runtime trend on a single explained row as the interaction order increases from $2$ to $6$: the runtime gap widens rapidly, and the speedup grows from $12.02\times$ at order $2$ to $1209.37\times$ at order $6$, reflecting both the stronger order dependence of TreeSHAP-IQ and the practical benefit of our SIMD-friendly implementation.

\section{Conclusion}

We introduced \methodname{}, a quadrature-based reformulation of Path-Dependent TreeSHAP that removes practical depth dependence, remains numerically stable, and extends naturally to higher-order Shapley interaction values. Because the computation reduces to a small fixed set of evaluation points, it is also well suited to efficient CPU and GPU implementations.

In experiments on 12 XGBoost benchmarks, this structure translated into substantial gains: \methodname{} achieved CPU speedups of $1.06\times$ to $10.59\times$ for first-order attributions and $3.80\times$ to $58.11\times$ for second-order interactions, while on GPU it reached speedups of up to $6.95\times$ and $6.65\times$, respectively. In higher-order Python benchmarks on CovType-sparse, the advantage over TreeSHAP-IQ grew from $12.02\times$ at order $2$ to $1209.37\times$ at order $6$. These results show that our quadrature-based formulation yields a practical, stable, and hardware-efficient approach to fast tree-based attribution across interaction orders.

\bibliographystyle{plain}
\bibliography{reference}

\appendix

\section{Additional First-Order Python Comparison with TreeGrad}
\label{sec:appendix_treegrad_python}
To complement the XGBoost first-order benchmark in Sect.~\ref{sec:results}, Table~\ref{tab:first_order_python_treegrad} compares Python implementations of TreeGrad-Shap and \methodname{} on 10 explained rows. TreeGrad-Shap uses its model-dependent exact quadrature count $n=\lceil d/2 \rceil$, which ranges from 3 to 26 across these models, whereas \methodshort{} uses 8 fixed points. On the nine supported noncategorical benchmarks, \methodshort{} is $2.15\times$--$2.67\times$ faster, with a median speedup of $2.28\times$, while agreeing with TreeGrad-Shap up to a maximum absolute difference of $1.86 \times 10^{-11}$. Adult models are shown without a TreeGrad-Shap baseline because the available adapter does not support categorical XGBoost splits.

\begin{table*}[t]
\centering
\caption{First-order Python runtimes on 10 explained rows. TreeGrad-Shap uses its model-specific exact quadrature count $n=\lceil \min(D,F)/2 \rceil$, whereas \methodshort{} uses 8 fixed points. Times are total wall-clock seconds; speedups are relative to TreeGrad-Shap. Missing TreeGrad-Shap entries indicate models with categorical XGBoost splits, which are unsupported by the available TreeGrad adapter.}
\label{tab:first_order_python_treegrad}
\scriptsize
\setlength{\tabcolsep}{4pt}
\resizebox{\textwidth}{!}{%
\begin{tabular}{lrrrrrrr}
\toprule
Benchmark & $F$ & $D$ & $d$ & $n_{\mathrm{TG}}$ & TreeGrad-Shap & \methodshort{} & Speedup \\
\midrule
Adult-small & 14 & 6 & 6 & \multicolumn{1}{c}{--} & \multicolumn{1}{c}{--} & 0.050 & \multicolumn{1}{c}{--} \\
Adult-large & 14 & 16 & 14 & \multicolumn{1}{c}{--} & \multicolumn{1}{c}{--} & 69.498 & \multicolumn{1}{c}{--} \\
Adult-sparse & 14 & 28 & 13 & \multicolumn{1}{c}{--} & \multicolumn{1}{c}{--} & 5.663 & \multicolumn{1}{c}{--} \\
\midrule
CalHousing-small & 8 & 6 & 5 & 3 & 0.186 & 0.074 & 2.52 \\
CalHousing-large & 8 & 16 & 8 & 4 & 1079.476 & 410.543 & 2.63 \\
CalHousing-sparse & 8 & 20 & 8 & 4 & 14.772 & 5.682 & 2.60 \\
\midrule
CovType-small & 54 & 6 & 6 & 3 & 0.091 & 0.040 & 2.28 \\
CovType-large & 54 & 16 & 14 & 8 & 3.881 & 1.806 & 2.15 \\
CovType-sparse & 54 & 24 & 18 & 12 & 0.679 & 0.306 & 2.22 \\
\midrule
Fashion-MNIST-small & 784 & 6 & 6 & 3 & 1.970 & 0.737 & 2.67 \\
Fashion-MNIST-large & 784 & 16 & 16 & 8 & 738.263 & 325.558 & 2.27 \\
Fashion-MNIST-sparse & 784 & 52 & 48 & 26 & 104.485 & 45.954 & 2.27 \\
\bottomrule
\end{tabular}%
}
\end{table*}

\section{Weighted Banzhaf Polynomial Proof}
\label{sec:weighted_banzhaf_proof}
This section includes the proof of Theorem~\ref{theorem:weighted_banzhaf_poly}. We start by proving two required lemmas, and then continue to the main proof.

\subsection{Required Lemma Proofs}

\begin{lemma} \label{lemma_delta_f}
Let $f_v:2^{[F]} \to \mathbb{R}$ denote the Path-Dependent game associated with a leaf $v$, as defined in Eq.~\eqref{eq:path_dependent_game}. For $S \subseteq [F]$ and $T \subseteq [F]\setminus S$, define (same as in Def.~\ref{def:any_order_wbanzhaf}):
\[
\Delta_S f_v(T) := \sum_{L \subseteq S} (-1)^{|S|-|L|} f_v(T \cup L),
\]
Then:
\[
\Delta_S f_v(T)
=
\begin{cases}
R_\emptyset^v
\left( \displaystyle\prod_{j \in S} (q_j^v - 1) \right)
\left( \displaystyle\prod_{j \in T \cap (M(v)\setminus S)} q_j^v \right),
& S \subseteq M(v), \\[10pt]
0 & \text{otherwise}.
\end{cases}
\]
\end{lemma}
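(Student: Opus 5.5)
The plan is to substitute the product form of the leaf game, Eq.~\eqref{eq:path_dependent_game}, directly into the alternating sum and factor it. Since $T \cap S = \emptyset$, every set $T \cup L$ with $L \subseteq S$ is a disjoint union, so
\[
f_v(T \cup L) = R_\emptyset^v \left(\prod_{j \in T} q_j^v\right)\left(\prod_{j \in L} q_j^v\right).
\]
The factor depending on $T$ does not depend on $L$, so it comes out of the sum. This leaves
\[
\Delta_S f_v(T) = R_\emptyset^v \left(\prod_{j\in T} q_j^v\right) \sum_{L\subseteq S} (-1)^{|S|-|L|}\prod_{j\in L} q_j^v .
\]

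The key step is to recognise the remaining sum as the expansion of a product:
\[
\sum_{L\subseteq S} (-1)^{|S|-|L|}\prod_{j\in L} q_j^v = \prod_{j\in S}(q_j^v - 1).
\]
Expanding the right-hand side, each term picks either $q_j^v$ (for $j\in L$) or $-1$ (for $j\in S\setminus L$), which produces exactly the sign $(-1)^{|S\setminus L|}$. I would state this as a one-line binomial-type identity, or prove it by induction on $|S|$ if more detail is wanted.

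The two cases of the statement then follow from the convention that null players have $q_j^v = 1$. If $S \not\subseteq M(v)$, some $j\in S$ has $q_j^v=1$, so the factor $(q_j^v-1)$ vanishes and $\Delta_S f_v(T)=0$. If $S\subseteq M(v)$, then in $\prod_{j\in T} q_j^v$ every $j\notin M(v)$ contributes $1$, so the product reduces to $\prod_{j\in T\cap M(v)} q_j^v$. Because $T$ is disjoint from $S$, this equals $\prod_{j\in T\cap(M(v)\setminus S)} q_j^v$, which is the claimed expression.

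No step is a real obstacle. The only points needing care are bookkeeping: using the disjointness $T\cap S=\emptyset$ to split the product cleanly, and handling features that are off the path via $q_j^v=1$ rather than treating them as absent from the formula.
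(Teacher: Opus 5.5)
Your proof is correct and follows the paper's route: substitute the product form of $f_v$, pull out the $T$-dependent factor, and recognise the remaining alternating sum as $\prod_{j\in S}(q_j^v-1)$. The only difference is cosmetic. You keep the convention $q_j^v=1$ for off-path features, so the identity holds for every $S$ and the case $S\not\subseteq M(v)$ follows from a vanishing factor. The paper instead restricts all products to $M(v)$ and handles that case with a separate pairing cancellation in its subset-expansion lemma.
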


\begin{proof}
We start with the definition:
\[
\Delta_S f_v(T)
=
\sum_{L \subseteq S} (-1)^{|S|-|L|} f_v(T \cup L).
\]

Substituting $f_v$ and splitting the product using the identities below:
\begin{enumerate}
  \item By the distributive property: $(T \cup L) \cap M(v) = (T \cap M(v)) \cup (L \cap M(v))$.
  \item Since $T, S$ are disjoint: $T \cap M(v) = T \cap (M(v)\setminus S)$
  \item Since $T, S$ are disjoint and $L \subseteq S$, it follows that $T$ and $L$ are disjoint.
  \item Since $T, L$ are disjoint, $(T \cap (M(v)\setminus S))$ and $(L \cap M(v))$ are also disjoint.
\end{enumerate} 

\[
f_v(T \cup L)
=
R_\emptyset^v \prod_{j \in (T \cup L) \cap M(v)} q_j^v = R_\emptyset^v \left( \prod_{j \in T \cap (M(v)\setminus S)} q_j^v \right)
\left( \prod_{j \in L \cap M(v)} q_j^v \right).
\]

Thus, (the set $T \cap (M(v)\setminus S)$ is independent of $L$):
\[
\Delta_S f_v(T)
=
R_\emptyset^v
\left( \prod_{j \in T \cap (M(v)\setminus S)} q_j^v \right)
\sum_{L \subseteq S} (-1)^{|S|-|L|}
\prod_{j \in L \cap M(v)} q_j^v.
\]

By Lemma~\ref{lemma_subset_expansion_null_player} (see below):
\[
\sum_{L \subseteq S} (-1)^{|S|-|L|}
\prod_{j \in L \cap M(v)} q_j^v
=
\begin{cases}
\displaystyle \prod_{j \in S} (q_j^v - 1),
& S \subseteq M(v), \\[8pt]
0, & \text{otherwise}.
\end{cases}
\]
which completes the proof.
\end{proof}

\begin{lemma}[Subset Expansion and Null-Player Cancellation]
\label{lemma_subset_expansion_null_player}
Let $S \subseteq [F]$ and let $M \subseteq [F]$. For any coefficients $\{q_j\}_{j \in [F]}$, define
\[
\Phi(S, M) :=
\sum_{L \subseteq S} (-1)^{|S|-|L|}
\prod_{j \in L \cap M} q_j.
\]
Then:
\[
\Phi(S, M)
=
\begin{cases}
\displaystyle \prod_{j \in S} (q_j - 1),
& S \subseteq M, \\[8pt]
0, & \text{otherwise}.
\end{cases}
\]
\end{lemma}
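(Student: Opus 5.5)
The plan is to split $S$ along $M$ and factor the sum. Every $L \subseteq S$ decomposes uniquely as $L = L_1 \cup L_2$ with $L_1 \subseteq S \cap M$ and $L_2 \subseteq S \setminus M$. Since $L \cap M = L_1$, the product $\prod_{j \in L \cap M} q_j$ depends only on $L_1$. The sign also factors, because $(-1)^{|S|-|L|} = (-1)^{|S\cap M|-|L_1|}\,(-1)^{|S\setminus M|-|L_2|}$. Hence
\[
\Phi(S,M) = \Bigl(\sum_{L_1 \subseteq S\cap M} (-1)^{|S\cap M|-|L_1|} \prod_{j\in L_1} q_j\Bigr)\Bigl(\sum_{L_2 \subseteq S\setminus M} (-1)^{|S\setminus M|-|L_2|}\Bigr).
\]

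Next I evaluate each factor separately. The second factor is $(1-1)^{|S\setminus M|}$ by the binomial theorem, so it equals $0$ when $S \not\subseteq M$ and $1$ (the empty-sum convention, with the single term $L_2=\emptyset$) when $S \subseteq M$. This is exactly the null-player cancellation. For the first factor, I expand $\prod_{j \in S\cap M}(q_j - 1)$ by choosing, for each $j$, either the term $q_j$ (when $j\in L_1$) or the term $-1$ (when $j \notin L_1$). Each $L_1 \subseteq S\cap M$ then contributes $\prod_{j\in L_1} q_j \cdot (-1)^{|S\cap M|-|L_1|}$, which matches the first sum term by term.

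Combining the two factors: if $S\subseteq M$, then $S\cap M=S$ and $\Phi(S,M) = \prod_{j\in S}(q_j-1)$; otherwise the second factor vanishes and $\Phi(S,M)=0$. The argument has no real obstacle. The only step needing care is the bijection $L \leftrightarrow (L_1,L_2)$ together with the sign factorization. An equally valid alternative is induction on $|S|$: remove one element $k\in S$ and pair each $L$ with $L\cup\{k\}$, which produces a factor $(q_k-1)$ if $k\in M$ and a factor $(1-1)=0$ otherwise.
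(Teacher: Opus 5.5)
Your proof is correct. It uses the same two ingredients as the paper, but organizes them differently.

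The paper splits into cases. For $S\not\subseteq M$ it fixes one $i\in S\setminus M$ and cancels the terms indexed by $L$ and $L\cup\{i\}$ pairwise. For $S\subseteq M$ it uses $L\cap M=L$ and expands $(-1)^{|S|}\prod_{j\in S}(1-q_j)$.

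You instead factor the sum through the bijection $L\leftrightarrow(L\cap M,\,L\setminus M)$. This gives the single closed form $\Phi(S,M)=\bigl(\prod_{j\in S\cap M}(q_j-1)\bigr)(1-1)^{|S\setminus M|}$, which settles both cases at once. The paper's pairing on one null element is the same cancellation as your factor $(1-1)^{|S\setminus M|}$, realized as an involution rather than through the binomial theorem. Your induction remark is essentially the paper's Case~1 pairing, extended to arbitrary $k\in S$.

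Your version shows more directly how the $M$-part and the null part decouple. The paper's casework avoids the bijection and sign-factorization bookkeeping.

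One small wording point: when $S\subseteq M$ the second factor is not an empty sum. It is a one-term sum over $L_2=\emptyset$, i.e.\ $(1-1)^0=1$.
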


\begin{proof}

\textbf{Case 1: $S \not\subseteq M$.}

Let $i \in S \setminus M$. Since $i \notin M$, the factor $q_i$ never appears in any product. Thus, for every $L \subseteq S \setminus \{i\}$,
\[
\prod_{j \in L \cap M} q_j
=
\prod_{j \in (L \cup \{i\}) \cap M} q_j.
\]

Consider the two terms indexed by $L$ and $L \cup \{i\}$. Their coefficients differ by a sign:
\[
(-1)^{|S|-|L|}
\quad \text{and} \quad
(-1)^{|S|-|L|-1}.
\]

Hence,
\[
(-1)^{|S|-|L|}
\prod_{j \in L \cap M} q_j
+
(-1)^{|S|-|L|-1}
\prod_{j \in (L \cup \{i\}) \cap M} q_j
= 0.
\]

Pairing all subsets in this way shows that every term cancels, and therefore
\[
\Phi(S, M) = 0.
\]

\medskip

\textbf{Case 2: $S \subseteq M$.}

In this case, since $S \subseteq M$, we have $L \cap M = L$ for all $L \subseteq S$. Hence
\[
\Phi(S, M)
=
\sum_{L \subseteq S} (-1)^{|S|-|L|}
\prod_{j \in L} q_j.
\]

We now show that this sum equals $\prod_{j \in S} (q_j - 1)$.

First, observe that
\[
\prod_{j \in S} (q_j - 1)
=
\prod_{j \in S} \bigl(-(1 - q_j)\bigr)
=
(-1)^{|S|} \prod_{j \in S} (1 - q_j).
\]

Next, we expand the product $\prod_{j \in S} (1 - q_j)$ using the standard subset expansion:
\[
\prod_{j \in S} (1 - q_j)
=
\sum_{L \subseteq S} \prod_{j \in L} (-q_j)
=
\sum_{L \subseteq S} (-1)^{|L|} \prod_{j \in L} q_j.
\]

Substituting this back, we obtain
\[
\prod_{j \in S} (q_j - 1)
=
(-1)^{|S|}
\sum_{L \subseteq S} (-1)^{|L|}
\prod_{j \in L} q_j
=
\sum_{L \subseteq S} (-1)^{|S|+|L|}
\prod_{j \in L} q_j.
\]

Considering the parity of both $|S|$ and $|L|$, we observe that $(-1)^{|S|+|L|} = (-1)^{|S|-|L|}$. Thus:
\[
\prod_{j \in S} (q_j - 1)
=
\sum_{L \subseteq S} (-1)^{|S|-|L|}
\prod_{j \in L} q_j.
\]

Comparing with the expression for $\Phi(S, M)$, we conclude that
\[
\Phi(S, M)
=
\prod_{j \in S} (q_j - 1),
\]
as claimed.

\end{proof}

\subsection{Theorem~\ref{theorem:weighted_banzhaf_poly} Proof}

Having established the required lemmas, we now provide the proof for Theorem~\ref{theorem:weighted_banzhaf_poly}. We take expectation over $T \sim p$ and assume $S \subseteq M(v)$ (otherwise the result is zero). Using the Lemma~\ref{lemma_delta_f}:
\[
\Delta_S f_v(T)
=
R_\emptyset^v
\left( \prod_{j \in S} (q_j^v - 1) \right)
\left( \prod_{j \in T \cap (M(v)\setminus S)} q_j^v \right).
\]

Taking expectation and extracting factors that do not depend on $p$ and $T$:
\[
B_{p,v}^{(S)}
=
R_\emptyset^v
\left( \prod_{j \in S} (q_j^v - 1) \right)
\mathbb{E}_{T \sim p}
\left[
\prod_{j \in T \cap (M(v)\setminus S)} q_j^v
\right].
\]

We rewrite the product using indicator variables, where $\mathbf{1}\{j \in T\}$ denotes whether feature $j$ is included in the random subset $T$. Then we use the independence of feature inclusion:
\[
\mathbb{E}_{T \sim p}
\left[
\prod_{j \in T \cap (M(v)\setminus S)} q_j^v
\right]
=
\mathbb{E}_{T \sim p}
\left[
\prod_{j \in M(v)\setminus S} (q_j^v)^{\mathbf{1}\{j \in T\}}
\right]
=
\prod_{j \in M(v)\setminus S}
\mathbb{E}_{T \sim p}
\left[
(q_j^v)^{\mathbf{1}\{j \in T\}}
\right],
\]

For each $j \in M(v)\setminus S$, we have
\[
\mathbb{E}_{T \sim p}
\left[
(q_j^v)^{\mathbf{1}\{j \in T\}}
\right]
=
(1-p)\cdot (q_j^v)^0 + p \cdot (q_j^v)^1
=
(1-p) + p q_j^v.
\]

Therefore,
\[
\mathbb{E}_{T \sim p}
\left[
\prod_{j \in T \cap (M(v)\setminus S)} q_j^v
\right]
=
\prod_{j \in M(v)\setminus S} \big( (1-p) + p q_j^v \big).
\]

Define:
\[
\alpha_j^v := q_j^v - 1.
\]

Then:
\[
(1-p) + p q_j^v = 1 + \alpha_j^v p.
\]

And:

\[
B_{p,v}^{(S)}
=
R_\emptyset^v
\left( \prod_{j \in S} (q_j^v - 1) \right)
\mathbb{E}_{T \sim p}
\left[
\prod_{j \in T \cap (M(v)\setminus S)} q_j^v
\right] = 
R_\emptyset^v
\left( \prod_{j \in S} \alpha_j^v \right)
\prod_{j \in M(v)\setminus S} \big( 1 + \alpha_j^v p \big).
\]

Note that by assigning $\alpha_j^v = q_j^v - 1$ in the polynomial above, we obtain Eq.~\eqref{eq:weighted_banzhaf_iv_v2}. 

Define the polynomial:
\[
H_v(p) := \prod_{j \in M(v)} (1 + \alpha_j^v p).
\]

Thus:
\[
\prod_{j \in M(v)\setminus S} (1 + \alpha_j^v p)
=
\frac{H_v(p)}{\prod_{j \in S} (1 + \alpha_j^v p)}.
\]

Recall that at the start of this section we assumed $S \subseteq M(v)$; otherwise, $\Delta_S f_v(T) = 0$. We obtain:
\[
\boxed{
B_{p,v}^{(S)}
=
\begin{cases}
R_\emptyset^v
\left( \displaystyle\prod_{j \in S} \alpha_j^v \right)
\displaystyle\frac{H_v(p)}{\prod_{j \in S} (1 + \alpha_j^v p)}
& S \subseteq M(v), \\[12pt]
0 & \text{otherwise}.
\end{cases}
}
\]
This proves Theorem~\ref{theorem:weighted_banzhaf_poly}.

\section{Proof of the quadrature-point theorem}
\label{sec:quadrature_points_proof}

In this section we prove Theorem~\ref{thm:decreasing_degree_main}.

\begin{proof}[Proof of Theorem~\ref{thm:decreasing_degree_main}]
Fix a set $S \subseteq [F]$ with $|S|=s$. By the weighted-Banzhaf polynomial formula, the leaf-level contribution can be written as
\[
\mathrm{SII}_v(S)
=
R_\emptyset^v
\left( \prod_{j \in S} \alpha_j^v \right)
\int_0^1 \frac{H_v(t)}{\prod_{j \in S}(1+\alpha_j^v t)}\,dt.
\]
Since
\[
H_v(t) = \prod_{j \in M(v)}(1+\alpha_j^v t),
\]
the denominator cancels the factors indexed by $S$, and therefore
\[
\frac{H_v(t)}{\prod_{j \in S}(1+\alpha_j^v t)}
=
H_v^{\setminus S}(t)
:=
\prod_{j \in M(v)\setminus S}(1+\alpha_j^v t).
\]
Hence
\[
\mathrm{SII}(S)
=
\int_0^1 \underbrace{\sum_{v:\, S\subseteq M(v)} R_\emptyset^v \left( \prod_{j \in S} \alpha_j^v \right) H_v^{\setminus S}(t)}_{I_S(t)}\,dt.
\]

For each leaf $v$, the polynomial $H_v^{\setminus S}(t)$ has degree $|M(v)|-s$. By the definition of $d$ (the maximum number of unique features along a root-to-leaf path), $|M(v)| \leq d$. Therefore,
\[
\deg H_v^{\setminus S}(t) \leq d-s.
\]
The total integrand $I_S(t)$ is a finite weighted sum of such polynomials, so it is itself a polynomial of degree at most $d-s$.

Finally, Gauss--Legendre quadrature with $n$ points is exact for all polynomials of degree at most $2n-1$~\cite{davis1984numerical,golub1969calculation}. Thus it is enough to require
\[
2n-1 \geq d-s,
\]
which is equivalent to
\[
n \geq \left\lceil \frac{d-s+1}{2} \right\rceil.
\]
This proves Theorem~\ref{thm:decreasing_degree_main}.
\end{proof}

\section{Recovering Shapley Interaction Values from Weighted Banzhaf Interactions Polynomial}
\label{sec:appendix_sii_from_weighted_banzhaf}

In this section we show that Shapley interaction values can be recovered by integrating
the weighted Banzhaf interaction values over the participation probability $p \in [0,1]$.

\begin{lemma}[Beta-function identity]
\label{lem:beta_identity}
For any integers $k \ge 0$ and $n \ge k$, we have
\[
\int_0^1 p^k (1-p)^{n-k}\,dp
=
\frac{k!(n-k)!}{(n+1)!}.
\]
\end{lemma}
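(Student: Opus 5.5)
I plan to prove the identity by induction on $n-k$, integrating by parts to lower the power of $(1-p)$ and raise the power of $p$. Write $I(k,m) := \int_0^1 p^k (1-p)^m\,dp$ with $m = n-k \ge 0$. The claim is then $I(k,m) = \frac{k!\,m!}{(k+m+1)!}$ for all integers $k,m\ge 0$.

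\emph{Base case} $m=0$. Here $I(k,0)=\int_0^1 p^k\,dp = \frac{1}{k+1}$, which equals $\frac{k!\,0!}{(k+1)!}$ for every $k \ge 0$.

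\emph{Inductive step.} Suppose $m\ge 1$ and the formula holds for $m-1$ and all $k$. Integrate by parts with $u=(1-p)^m$ and $dv = p^k\,dp$, so $v = p^{k+1}/(k+1)$. The boundary term $\big[(1-p)^m p^{k+1}/(k+1)\big]_0^1$ vanishes at both endpoints, because $m\ge 1$ and $k+1\ge 1$. This leaves the recursion
\[
I(k,m) = \frac{m}{k+1}\, I(k+1,m-1).
\]
By the induction hypothesis, $I(k+1,m-1) = \frac{(k+1)!\,(m-1)!}{(k+m+1)!}$. Multiplying by $\frac{m}{k+1}$ gives $\frac{k!\,m!}{(k+m+1)!}$, as required.

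There is no real obstacle. The only point needing care is checking that the boundary term vanishes, which uses $m\ge1$ at $p=1$ and $k+1\ge1$ at $p=0$. Alternatively, one could cite the Beta function identity $B(a,b)=\Gamma(a)\Gamma(b)/\Gamma(a+b)$ with $a=k+1$, $b=n-k+1$, but the elementary induction keeps the appendix self-contained.
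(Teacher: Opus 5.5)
Your proof is correct, and it takes a different route from the paper. The paper does not derive the identity. It cites the general relation $\int_0^1 p^{a-1}(1-p)^{b-1}\,dp = \Gamma(a)\Gamma(b)/\Gamma(a+b)$, substitutes $a=k+1$, $b=n-k+1$, and uses $\Gamma(m+1)=m!$.

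You instead set $m=n-k$ and prove the integer case directly by induction on $m$. The hypothesis is quantified over all $k$, which is exactly what the recursion $I(k,m)=\frac{m}{k+1}I(k+1,m-1)$ requires, since it raises $k$ while lowering $m$. You also check correctly that the boundary term vanishes at both endpoints.

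The paper's citation is shorter and covers all real $a,b>0$. However, it shifts the work onto the Beta--Gamma relation, whose standard proof (Fubini plus a change of variables in $\Gamma(a)\Gamma(b)$) is less elementary than the fact being used. Your argument needs only integration by parts and is fully self-contained. It proves exactly the integer case that Theorem~\ref{thm:sii_from_weighted_banzhaf} needs, with $k=|T|$ and $n=F-|S|$.
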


\begin{proof}
This is the standard Beta-function identity:
\[
\int_0^1 p^{a-1}(1-p)^{b-1}\,dp
=
\frac{\Gamma(a)\Gamma(b)}{\Gamma(a+b)}.
\]
Setting $a=k+1$ and $b=n-k+1$, and using $\Gamma(m+1)=m!$ for integers $m$,
gives the result.
\end{proof}

\begin{theorem}[Shapley interaction values as an integral of weighted Banzhaf interactions]
\label{thm:sii_from_weighted_banzhaf}
Let $f_v:2^{[F]} \to \mathbb{R}$ be a set function and let $S \subseteq [F]$.
Then
\[
\mathrm{SII}(S) = \int_0^1 B_p^{(S)}(f_v)\, dp.
\]
\end{theorem}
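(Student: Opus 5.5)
We prove the identity by expanding the expectation in Definition~\ref{def:any_order_wbanzhaf} over subsets and integrating term by term with Lemma~\ref{lem:beta_identity}. Throughout we use the standard definition of the Shapley interaction index (Grabisch--Roubens):
\[
\mathrm{SII}(S)=\sum_{T\subseteq [F]\setminus S}\frac{|T|!\,(F-|T|-|S|)!}{(F-|S|+1)!}\,\Delta_S f_v(T).
\]
Since the paper never restates this formula, the proof should state it at the outset as the definition being matched.

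\textbf{Step 1 (expand the expectation).} Fix $S$ with $|S|=s$, and write $N=[F]\setminus S$, so $|N|=F-s$. Each feature of $N$ is included independently with probability $p$. Hence a fixed $T\subseteq N$ has probability $p^{|T|}(1-p)^{F-s-|T|}$, and
\[
B_p^{(S)}(f_v)=\sum_{T\subseteq N} p^{|T|}(1-p)^{F-s-|T|}\,\Delta_S f_v(T).
\]
The quantity $\Delta_S f_v(T)$ does not depend on $p$. So $B_p^{(S)}$ is a polynomial in $p$, and it is a finite sum.

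\textbf{Step 2 (integrate term by term).} Because the sum is finite, integration over $[0,1]$ commutes with summation. Apply Lemma~\ref{lem:beta_identity} with $k=|T|$ and $n=F-s$; the lemma applies since $0\le |T|\le F-s$. This gives
\[
\int_0^1 B_p^{(S)}(f_v)\,dp=\sum_{T\subseteq N}\frac{|T|!\,(F-s-|T|)!}{(F-s+1)!}\,\Delta_S f_v(T).
\]

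\textbf{Step 3 (identify the weights).} The right-hand side has exactly the Shapley interaction weights, so it equals $\mathrm{SII}(S)$. For $s=1$ this reduces to the familiar Shapley weights $|T|!(F-|T|-1)!/F!$, which gives a consistency check.

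\textbf{Main obstacle.} The computation itself is routine. The only real subtlety is the normalization convention for $\mathrm{SII}$. The Grabisch--Roubens index uses the denominator $(F-s+1)!$, and this must match the output of the Beta integral exactly. If the paper intends a different interaction index, such as Shapley--Taylor or a differently normalized one, the weights will not match, and the statement holds only for the Grabisch--Roubens index. I would therefore fix that definition explicitly before Step 1.
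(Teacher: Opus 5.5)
Your proposal is correct and follows the paper's proof essentially step for step. You expand $B_p^{(S)}$ as a finite sum over $T\subseteq[F]\setminus S$ with weights $p^{|T|}(1-p)^{F-|S|-|T|}$, integrate term by term using the Beta identity with $k=|T|$ and $n=F-|S|$, and recognize the Grabisch--Roubens weights $|T|!\,(F-|S|-|T|)!/(F-|S|+1)!$; your caveat that the identity is tied to this normalization of $\mathrm{SII}$ matches the definition the paper invokes at its final step.
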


\begin{proof}
By definition of the weighted Banzhaf interaction, where $T$ is sampled by including each feature in $[F]\setminus S$ independently with probability $p$ (i.e., $T \sim \mathrm{Bernoulli}(p)$):
\[
B_p^{(S)}(f_v)
=
\mathbb{E}_{T \sim p}\!\left[\Delta_S f_v(T)\right]
=
\sum_{T \subseteq [F]\setminus S}
p^{|T|}(1-p)^{F-|S|-|T|}
\Delta_S f_v(T),
\]
where
\[
\Delta_S f_v(T)
:=
\sum_{L \subseteq S} (-1)^{|S|-|L|} f_v(T \cup L).
\]

Integrating over $p$ and exchanging sum and integral,
\[
\int_0^1 B_p^{(S)}(v)\,dp
=
\sum_{T \subseteq [F]\setminus S}
\left(
\int_0^1 p^{|T|}(1-p)^{F-|S|-|T|}\,dp
\right)
\Delta_S f_v(T).
\]

Applying Lemma~\ref{lem:beta_identity} with $k=|T|$ and $n=F-|S|$, we obtain
\[
\int_0^1 B_p^{(S)}(v)\,dp
=
\sum_{T \subseteq [F]\setminus S}
\frac{|T|!(F-|S|-|T|)!}{(F-|S|+1)!}
\Delta_S f_v(T).
\]

This is exactly the definition of the Shapley interaction index~\cite{PB_shap_and_banzhaf}:
\[
\mathrm{SII}(S)
=
\sum_{T \subseteq [F]\setminus S}
\frac{|T|!(F-|S|-|T|)!}{(F-|S|+1)!}
\Delta_S f_v(T).
\]
Therefore,
\[
\mathrm{SII}(S) = \int_0^1 B_p^{(S)}(f_v)\,dp.
\]
\end{proof}

\begin{corollary}[First-order case]
\label{cor:first_order_weighted_banzhaf}
For $S=\{i\}$, the theorem reduces to
\[
\phi_i(f_v) = \int_0^1 B_p^{(\{i\})}(f_v)\,dp.
\]
\end{corollary}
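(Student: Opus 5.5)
This is the case $S=\{i\}$ of Theorem~\ref{thm:sii_from_weighted_banzhaf}, so the plan is to instantiate that theorem and check that both sides reduce to the stated quantities.

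On the left, $\mathrm{SII}(\{i\})$ is the order-one Shapley interaction index. Setting $|S|=1$ in the formula for $\mathrm{SII}(S)$ used in the proof of Theorem~\ref{thm:sii_from_weighted_banzhaf} gives the weight
\[
\frac{|T|!\,(F-1-|T|)!}{F!}
\]
on each $T \subseteq [F]\setminus\{i\}$. The discrete derivative becomes $\Delta_{\{i\}} f_v(T) = f_v(T\cup\{i\}) - f_v(T)$, since $L$ ranges only over $\emptyset$ and $\{i\}$. This sum is exactly the classical Shapley value $\phi_i(f_v)$.

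On the right, Definition~\ref{def:any_order_wbanzhaf} with $S=\{i\}$ gives
\[
B_p^{(\{i\})}(f_v)=\mathbb{E}_{T\sim p}\!\left[f_v(T\cup\{i\})-f_v(T)\right],
\]
which is the weighted Banzhaf value of player $i$. Equating the two sides via Theorem~\ref{thm:sii_from_weighted_banzhaf} yields the corollary.

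There is no real obstacle here. The only point needing care is confirming that the Shapley interaction index at order one agrees with the standard Shapley value normalization, and this follows immediately from matching the weights.
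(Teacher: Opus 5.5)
Your proposal is correct and takes the same route as the paper, which states the corollary as the direct specialization $S=\{i\}$ of Theorem~\ref{thm:sii_from_weighted_banzhaf} without a separate proof. Your explicit check is exactly the implicit verification that specialization relies on: the order-one weight $|T|!\,(F-1-|T|)!/F!$ and the derivative $f_v(T\cup\{i\})-f_v(T)$ together reproduce the classical Shapley value.
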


\end{document}